\documentclass[runningheads]{llncs}
\usepackage{graphicx}
\usepackage{amsmath,amssymb,amsfonts}
\usepackage{bm}
\usepackage{booktabs}
\usepackage{multirow}
\usepackage{array}
\usepackage{mathtools}
\usepackage{xspace}
\usepackage{caption}
\usepackage{subcaption}
\usepackage{hyperref}
\usepackage[table]{xcolor}
\usepackage{microtype}
\usepackage{wrapfig}
\newcommand{\method}{GaugeDefect\xspace}
\newcommand{\R}{\mathbb{R}}

\newcommand{\D}{\mathcal{D}}

\newcommand{\argmin}{\mathop{\mathrm{arg\,min}}}

\newcommand{\eps}{\varepsilon}

\begin{document}
\title{GaugeDefect: Detecting Surface Anomalies by Curvature of Feature Transport}
\titlerunning{GaugeDefect}
%
\author{
	Yefan Wang\thanks{Corresponding author.}
}

\authorrunning{Y. Wang}

\institute{
	University of Shanghai for Science and Technology, Shanghai, China\\
	\email{yefanwang88@gmail.com}
}
\maketitle              
\begin{abstract}
	Industrial anomaly localization has advanced rapidly with feature-based, reconstruction-based, and distillation-based methods. Most of these methods score a region by asking how unusual its local appearance or feature representation is with respect to normal training images. This is a strong and practical formulation. In this work, we study a complementary geometric cue for cases where an abnormal region may still contain locally plausible visual features. Thin scratches, small dents, and disrupted repeated patterns often do not make every local patch individually abnormal; instead, they disturb how nearby features vary and connect across the surface. We propose \method, a geometric method for surface anomaly localization based on the curvature of feature transport. Given a feature lattice, we estimate a local feature frame at each node and compute orthogonal transports between neighboring frames. The accumulated transport around a small closed loop gives a holonomy matrix, whose deviation from identity measures feature-transport curvature. After calibration on normal training images, unusually large curvature indicates a local inconsistency in the feature field. The curvature here is not the physical curvature of the inspected object, but a representation-space measure of neighborhood inconsistency. This makes the method applicable to curved surfaces, textured materials, and non-planar industrial objects. Its main role is to improve localization of subtle surface disruptions, while often producing sharper responses near defect boundaries as a natural consequence of the curvature signal.
	\keywords{Surface anomaly localization \and Industrial anomaly detection \and Feature transport \and Discrete curvature \and Loop holonomy \and Visual inspection}
\end{abstract}
\begin{figure*}[t]
	\centering
	\includegraphics[width=\linewidth]{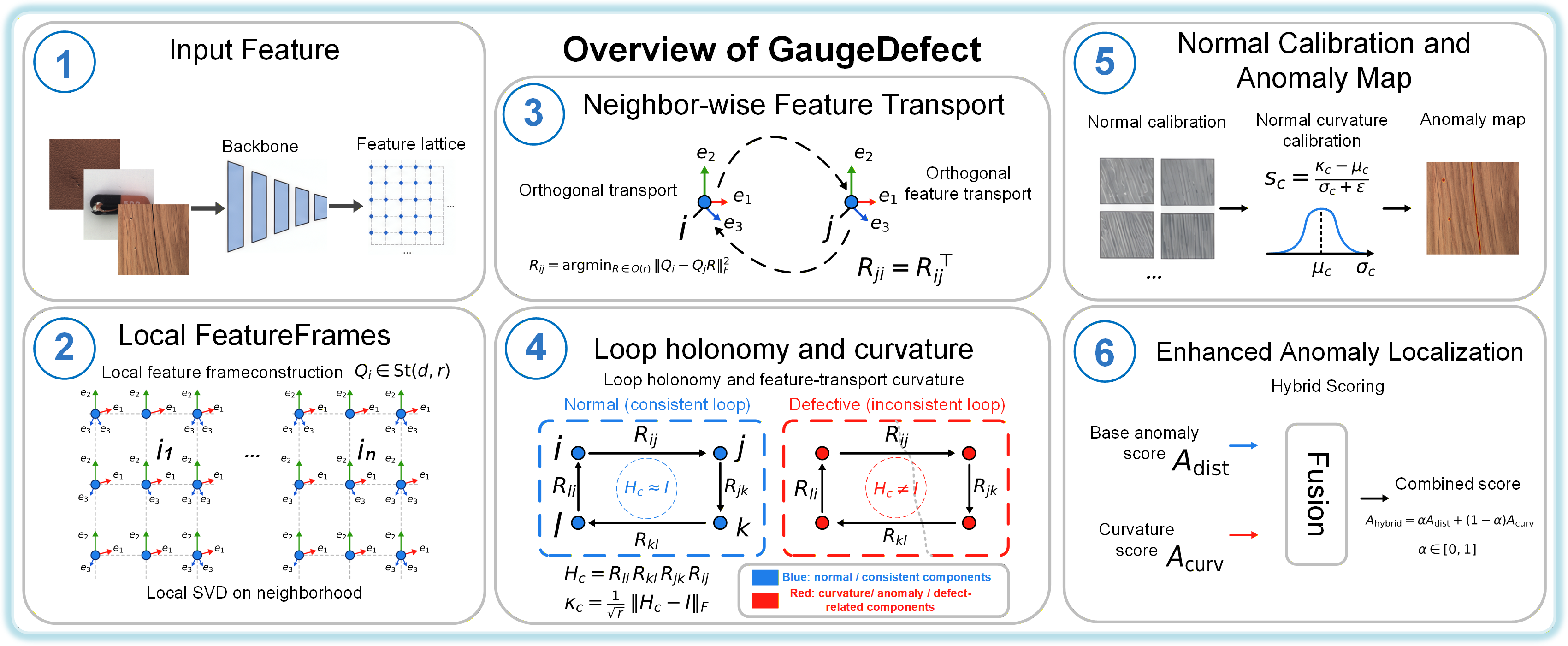}
	\caption{Overview of \method. The method extracts a feature lattice, builds local feature frames, estimates orthogonal transport, computes loop holonomy and curvature, calibrates curvature with normal data, and produces a dense anomaly map.}
	\label{fig:overview}
	\vspace{-3mm}
\end{figure*}

\section{Introduction}

Industrial visual inspection is commonly formulated as an anomaly localization problem with only normal images available during training. At test time, the system must localize defects such as scratches, dents, cracks, stains, missing parts, and subtle texture disruptions. Because abnormal samples are rare and diverse, many effective methods rely on pretrained features, reconstruction errors, distillation discrepancies, memory banks, or synthetic anomalies to estimate whether a local region is abnormal.

This line of work is effective and has led to strong anomaly localization systems. Still, not every defect is best described as a clearly out-of-distribution patch. In many cases, a defective patch is visibly different from normal patches, and a local feature distance or reconstruction discrepancy is enough to reveal it. Yet not all surface defects behave in this way. A thin scratch on a brushed metal surface may share high-frequency texture statistics with its surroundings. A small dent may only slightly disturb a smooth appearance pattern. A broken repeated pattern may still contain elements that are individually normal. In these cases, the local patch itself may remain plausible, while the relation between neighboring patches becomes inconsistent.

This observation motivates \method. Instead of replacing existing anomaly detectors, we add a geometric signal that measures local consistency. The key question is simple: if local feature frames are transported around a small closed loop, do they come back consistently to the starting point? When the loop encloses a subtle defect, the accumulated transport becomes less consistent, and this inconsistency can be measured as curvature.

Fig.~\ref{fig:overview} shows the full pipeline. An input image is first converted into a feature lattice. Each node of the lattice corresponds to a deterministic spatial unit: a patch token in a Vision Transformer feature map or a feature cell in a convolutional feature map. Around each node, we estimate a local feature frame using the features in a small spatial neighborhood. Neighboring frames are aligned by orthogonal transport, and the product of transports around an elementary loop gives a holonomy matrix. The deviation of this holonomy from identity is used as a curvature score. Finally, curvature statistics estimated from normal training images are used to calibrate the anomaly map. Fig.~\ref{fig:principle} illustrates the central idea: in a normal region, loop transport is nearly consistent; in a defective region, the same loop becomes inconsistent, leading to a larger curvature score.

A cable, bottle, screw, or textured material may naturally have non-planar geometry or nontrivial normal appearance variation. What we measure is curvature in representation space: the failure of feature transport to remain locally consistent. This method can also be applied to curved and textured materials, without requiring the underlying physical surface to be flat.

The main contributions are: (1) We formulate surface anomaly localization through feature-transport curvature, targeting defects that are locally plausible but relationally inconsistent. (2) We construct a gauge-invariant pipeline with feature-lattice nodes, local frames, orthogonal transports, loop holonomy, and normal curvature calibration. (3) We validate \method on MVTec AD, VisA, and Real-IAD, with ablations on backbone choice, node construction, frame rank, loop window, and calibration.

\section{Related Work}

\subsection{Industrial Anomaly Benchmarks}

MVTec AD is the standard benchmark for unsupervised industrial anomaly detection and localization, with pixel-level annotations across both object and texture categories~\cite{Bergmann2019MVTecA}. VisA broadens this setting with more industrial objects and defect types~\cite{Zou2022SPottheDifferenceSP}. Real-IAD moves closer to real-world inspection by providing a large-scale multi-view industrial anomaly benchmark~\cite{Wang2024RealIADAR}. Beyond appearance anomalies, MVTec LOCO AD studies logical anomalies, showing that industrial anomaly detection is not always a purely local texture problem~\cite{Bergmann2022BeyondDA}.

These datasets make it clear that the field has moved beyond simple defect spotting. A practical method should handle both obvious anomalies and subtle ones. Our work is motivated by the latter.

\subsection{Recent Anomaly Localization Methods}

PatchCore~\cite{Roth2021TowardsTR} remains one of the most influential memory-based baselines. It scores each test feature by nearest-neighbor distance to a normal feature bank. This pointwise perspective is a natural reference for our work, since \method is designed to complement rather than dismiss it.

More recent methods improve anomaly localization from different directions. SimpleNet~\cite{Liu2023SimpleNetAS} uses a simple feature-space pipeline with synthetic anomalies. EfficientAD~\cite{Batzner2023EfficientADAV} emphasizes high accuracy at low latency. RealNet~\cite{Zhang2024RealNetAF} improves synthetic anomalies through feature selection and more realistic generation. GLASS~\cite{Chen2024AUA} proposes a unified anomaly synthesis strategy. GeneralAD~\cite{Strater2024GeneralADAD} studies anomaly detection across domains by attending to distorted features. INP-Former~\cite{Luo2025ExploringIN} explores intrinsic normal prototypes within a single image. Dinomaly~\cite{Guo2024DinomalyTL} revisits multi-class unsupervised anomaly detection with a simpler design.

These methods are strong baselines. Most of them still rely mainly on local abnormality, reconstruction discrepancy, or prototype mismatch. Our method adds another signal: whether the local feature field is geometrically consistent under transport.

\subsection{Visual Feature Lattices}

Modern anomaly localization methods commonly rely on pretrained visual features. Recent visual learning increasingly exploits relational and cross-modal knowledge~\cite{10.1145/3503161.3548238,9746752,10.1145/3805622.3810732}. In this work, we use two types of feature lattices. The first is a transformer-based lattice from DINOv2~\cite{Oquab2023DINOv2LR}, where each node corresponds to a patch token at a fixed spatial location. The second is a convolutional lattice from ResNet50~\cite{He2015DeepRL}, where each node corresponds to a spatial cell in an intermediate feature map. These two choices allow us to test whether feature-transport curvature depends on a particular backbone family.

\subsection{Feature Transport and Discrete Curvature}

The mathematical construction of \method is related to classical ideas in local frame alignment and connection geometry. The transport between neighboring frames is estimated by the orthogonal Procrustes problem~\cite{Schnemann1966AGS}. The local frames lie on the Stiefel manifold and the transport matrices lie on the orthogonal group, both standard objects in matrix manifold optimization~\cite{Absil2007OptimizationAO}. The broader idea of comparing local vector spaces through a connection is related to vector diffusion maps and the connection Laplacian~\cite{Singer2011VectorDM}.

Our use of this geometry is deliberately local and practical. We do not estimate a physical surface, depth map, or global manifold. Instead, we compute discrete loop holonomy directly on the feature lattice. This makes the method easy to apply on top of standard visual backbones.

\begin{figure}[t]
	\centering
	\includegraphics[width=0.95\textwidth]{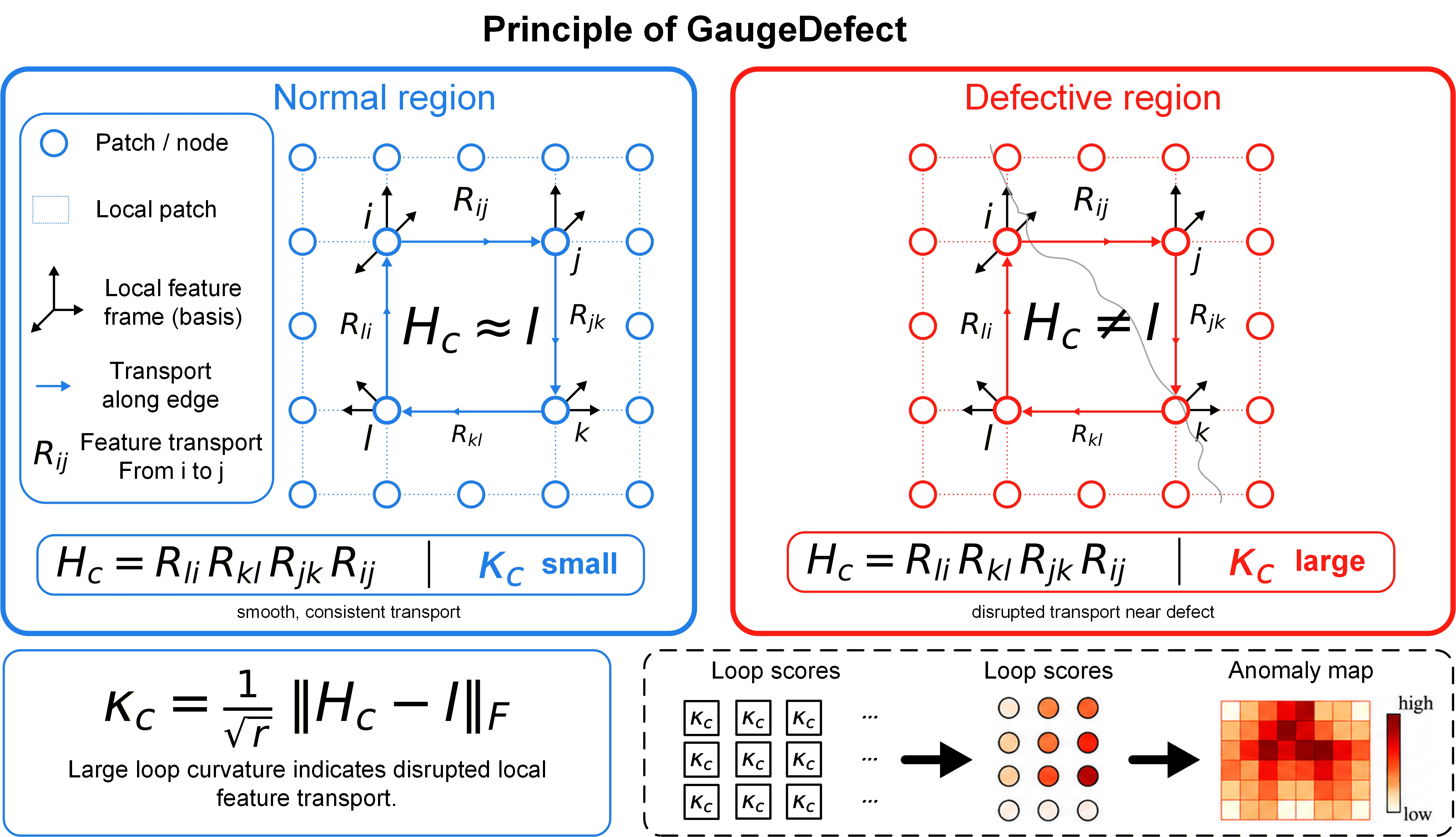}
	\caption{Principle of \method. In a normal region, local feature transport around a loop is consistent, so the holonomy is close to the identity and the curvature is small. In a defective region, the local feature field is disturbed, the same loop becomes inconsistent, and the curvature increases. Loop scores are then aggregated into node scores and finally into an anomaly map.}
	\label{fig:principle}
	\vspace{-3mm}
\end{figure}

\section{Method}

\subsection{Overview}

Given an image $x$, \method first extracts a spatial feature lattice from a pretrained backbone. Each lattice node corresponds to a deterministic image region: a patch token for a ViT-style backbone or a convolutional feature cell for a CNN-style backbone. Local feature frames are then estimated from neighboring nodes. Orthogonal transports are computed between adjacent frames, and feature-transport curvature is measured by the holonomy accumulated around small closed loops. The final anomaly map is obtained by calibrating loop curvature with normal training statistics and aggregating loop scores back to image space. The six stages in Fig.~\ref{fig:overview} follow this process. The principle of loop curvature is illustrated in Fig.~\ref{fig:principle}.

\subsection{Feature Lattice and Node Construction}

Let $\D_{\mathrm{N}}=\{x_m\}_{m=1}^{M}$ be the normal training set. For an input image $x$, a pretrained backbone $\Phi$ produces a spatial feature tensor
\begin{equation}
	F=\Phi(x)\in\R^{H\times W\times d}.
\end{equation}
We define a feature lattice $G=(V,E)$, where each node $i\in V$ corresponds to one spatial location on the $H\times W$ feature map, and carries a feature vector $f_i\in\R^d$. The definition of a node depends on the backbone but is deterministic in both cases.

\paragraph{ViT feature lattice.}
For a ViT-style backbone such as DINOv2, the image is resized to a fixed resolution divisible by the patch size $p$. The backbone outputs patch tokens arranged on a regular grid. Each node corresponds to one non-overlapping image patch,
\begin{equation}
	\Omega_i =
	[up,(u+1)p)\times[vp,(v+1)p),
\end{equation}
where $i=(u,v)$ indexes the patch-token position. The node feature $f_i$ is the corresponding patch-token embedding.

\paragraph{CNN feature lattice.}
For a CNN backbone such as ResNet50, we take an intermediate feature map with stride $s$. Each node corresponds to one spatial feature cell whose receptive field is centered at a fixed image location. If multiple feature levels are used, they are resized to a common lattice resolution and concatenated along the channel dimension. In the default setting, we use a single intermediate layer to avoid ambiguity in node definition.

\paragraph{Node normalization.}
Before building local frames, each node feature is $\ell_2$-normalized:
\begin{equation}
	f_i \leftarrow \frac{f_i}{\|f_i\|_2+\eps}.
\end{equation}
This keeps the subsequent frame construction focused on local feature directions rather than feature magnitude.

\paragraph{Spatial edges and elementary loops.}
Edges connect spatially adjacent nodes. By default, we use the four-neighbor grid:
\begin{equation}
	E=\{(i,j): j \text{ is the left, right, upper, or lower neighbor of } i\}.
\end{equation}
The default loop is the elementary plaquette formed by four adjacent lattice nodes. It is a $2\times2$ node loop on the feature lattice, traversed by four neighbor-wise transports. Boundary nodes that cannot participate in complete elementary plaquettes are excluded from loop scoring and receive scores only through interpolation from nearby valid nodes.

This construction makes node sampling explicit. The method does not depend on object masks, saliency maps, or hand-selected keypoints. The ablation study evaluates how the choice of backbone, feature layer, lattice resolution, and node density affects performance. This corresponds to stage 1 in Fig.~\ref{fig:overview}.

\subsection{Local Feature Frames}

For each node $i$, we define a local neighborhood $\mathcal{N}(i)$ as a $k\times k$ window on the feature lattice. The centered neighborhood matrix is
\begin{equation}
	X_i =
	\left[
	f_{j_1}-\bar f_i,\,
	f_{j_2}-\bar f_i,\,
	\ldots,\,
	f_{j_{|\mathcal{N}(i)|}}-\bar f_i
	\right]
	\in\R^{d\times |\mathcal{N}(i)|},
\end{equation}
where
\begin{equation}
	\bar f_i =
	\frac{1}{|\mathcal{N}(i)|}
	\sum_{j\in\mathcal{N}(i)} f_j.
\end{equation}

We compute $X_i=U_i\Sigma_iV_i^{\top}$ and take the top $r$ left singular vectors as the local frame: $Q_i = U_i[:,1:r]$.

Thus
\begin{equation}
	Q_i\in\mathrm{St}(d,r)
	=
	\{Q\in\R^{d\times r}:Q^{\top}Q=I_r\}.
\end{equation}

The rank $r$ must satisfy $r \leq |\mathcal{N}(i)|-1$, because the neighborhood features are centered. In our default setting, $k=3$ and $r=8$, so the local frame uses the maximal non-trivial rank of a centered $3\times3$ neighborhood.

The role of $Q_i$ is to describe the dominant local variation of features around node $i$. It is not an absolute coordinate system. Different orthogonal bases can represent the same local subspace, so the final score must be invariant to frame orientation. This corresponds to stage 2 in Fig.~\ref{fig:overview}.

\subsection{Orthogonal Feature Transport}

For each edge $(i,j)\in E$, we compute an orthogonal transport matrix $R_{ij}\in\mathrm{O}(r)$ by aligning the local frame at $j$ to the local frame at $i$:
\begin{equation}
	R_{ij}
	=
	\argmin_{R\in\mathrm{O}(r)}
	\|Q_i-Q_jR\|_F^2.
	\label{eq:procrustes}
\end{equation}
Let $Q_j^{\top}Q_i=U_{ij}\Sigma_{ij}V_{ij}^{\top}$. The orthogonal Procrustes solution is $R_{ij}=U_{ij}V_{ij}^{\top}$.

This transport aligns neighboring local coordinate systems while preserving lengths and angles in the $r$-dimensional feature-frame coordinates. The orthogonal constraint is important: a general linear map could absorb arbitrary local changes and weaken the meaning of loop inconsistency. This corresponds to stage 3 in Fig.~\ref{fig:overview}.

\subsection{Loop Holonomy and Feature-Transport Curvature}

Consider an elementary loop $c=(i,j,k,l)$ traversed as $i\rightarrow j\rightarrow k\rightarrow l\rightarrow i$. The accumulated transport around this loop is $H_c = R_{li}R_{kl}R_{jk}R_{ij}$.

If local feature transport is consistent, moving around the loop should return to nearly the same coordinate system: $H_c \approx I_r$. When the loop encloses a local disruption, the accumulated transport deviates from identity. We define the raw loop curvature as
\begin{equation}
	\kappa_c(x)
	=
	\frac{1}{\sqrt r}
	\|H_c-I_r\|_F.
	\label{eq:curvature}
\end{equation}

The normalization by $\sqrt r$ makes scores comparable across different frame ranks. Fig.~\ref{fig:principle} visualizes this principle: normal regions have small loop curvature, while defective regions lead to larger curvature because feature transport is disrupted. This corresponds to stage 4 in Fig.~\ref{fig:overview}.

\subsection{Gauge Invariance}

The local frame $Q_i$ is only defined up to an orthogonal change of basis. If $Q_i' = Q_iG_i, G_i\in\mathrm{O}(r)$, then $Q_i$ and $Q_i'$ represent the same local subspace. A valid curvature score should not depend on this arbitrary choice.

\begin{proposition}
	Under arbitrary local frame rotations $Q_i'=Q_iG_i$, the loop holonomy transforms by orthogonal conjugation: $H_c' = G_i^{\top}H_cG_i$. Therefore, the curvature score in Eq.~\eqref{eq:curvature} is invariant.
\end{proposition}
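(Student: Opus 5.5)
The plan is to first find how a single Procrustes transport changes under a change of frames, and then compose around the loop.

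\textbf{Step 1: one edge.} Fix an edge $(i,j)$ and replace $Q_i,Q_j$ by $Q_i'=Q_iG_i$ and $Q_j'=Q_jG_j$. Then
\[
Q_j'^{\top}Q_i' = G_j^{\top}(Q_j^{\top}Q_i)G_i = (G_j^{\top}U_{ij})\Sigma_{ij}(G_i^{\top}V_{ij})^{\top}.
\]
Since $G_j^{\top}U_{ij}$ and $G_i^{\top}V_{ij}$ are orthogonal, this is again a singular value decomposition. The Procrustes solution is therefore
\[
R_{ij}'=G_j^{\top}U_{ij}V_{ij}^{\top}G_i=G_j^{\top}R_{ij}G_i .
\]

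A cleaner route avoids SVD bookkeeping altogether. Orthogonal invariance of the Frobenius norm gives
\[
\|Q_iG_i-Q_jG_jR\|_F=\|Q_i-Q_j(G_jRG_i^{\top})\|_F .
\]
The substitution $R\mapsto G_jRG_i^{\top}$ is a bijection of $\mathrm{O}(r)$, so the minimizers correspond, and $R_{ij}'=G_j^{\top}R_{ij}G_i$.

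\textbf{Main obstacle: non-uniqueness.} The argmin in Eq.~\eqref{eq:procrustes} is unique only when $Q_j^{\top}Q_i$ has full rank. I would assume this generic condition. Without it, the argmin variant still shows that the set of minimizers transforms by this conjugation, so any consistent selection rule behaves the same way.

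\textbf{Step 2: the loop.} As written, $H_c=R_{li}R_{kl}R_{jk}R_{ij}$ composes maps in reverse order along the path. What matters is that consecutive factors share a node. To make the $G$'s telescope, I would adopt the convention that $R_{ab}$ maps frame coordinates at $a$ to those at $b$, so the transformation law reads $R'_{ab}=G_b^{\top}R_{ab}G_a$. Substituting into the product, each inner pair $G_aG_a^{\top}$ equals $I_r$ and cancels, leaving
\[
H_c'=G_i^{\top}H_cG_i .
\]
If the literal index convention instead yields $R'_{ij}=G_j^{\top}R_{ij}G_i$, I would note that the consistent composition requires $R_{ij}$ to act $i\to j$. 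Either way the loop product is conjugated by the basepoint gauge, possibly by $G_i$ on the opposite side, which does not affect the conclusion.

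\textbf{Step 3: the curvature.} From Step 2,
\[
H_c'-I_r=G_i^{\top}(H_c-I_r)G_i .
\]
The Frobenius norm is invariant under left and right multiplication by orthogonal matrices, so $\kappa_c'=\kappa_c$.

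This also shows that the spectrum of $H_c$ is gauge invariant, a fact that may be useful later.
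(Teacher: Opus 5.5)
Your argument is correct and is essentially the paper's proof: the edge law $R'_{ij}=G_j^{\top}R_{ij}G_i$, telescoping of the inner $G_aG_a^{\top}$ factors around the loop, and orthogonal invariance of the Frobenius norm. You also justify the edge law through the bijection $R\mapsto G_jRG_i^{\top}$ of $\mathrm{O}(r)$ and flag the full-rank condition needed for a unique minimizer, both of which the paper leaves implicit; the hedging in your Step 2 is unnecessary, because the law you derived in Step 1 already is your $a\to b$ convention, so the literal product $R_{li}R_{kl}R_{jk}R_{ij}$ telescopes to exactly $G_i^{\top}H_cG_i$.
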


\begin{proof}
	Under the frame change, the edge transport becomes $R_{ij}'=G_j^{\top}R_{ij}G_i$. For the loop $c=(i,j,k,l)$,
	\begin{align}
		H_c'
		&=
		R_{li}'R_{kl}'R_{jk}'R_{ij}' \nonumber\\
		&=
		(G_i^{\top}R_{li}G_l)
		(G_l^{\top}R_{kl}G_k)
		(G_k^{\top}R_{jk}G_j)
		(G_j^{\top}R_{ij}G_i) \nonumber\\
		&=
		G_i^{\top}
		(R_{li}R_{kl}R_{jk}R_{ij})
		G_i \nonumber\\
		&=
		G_i^{\top}H_cG_i.
	\end{align}
	Since the Frobenius norm is invariant under orthogonal conjugation,
	\begin{equation}
		\|H_c'-I_r\|_F
		=
		\|G_i^{\top}(H_c-I_r)G_i\|_F
		=
		\|H_c-I_r\|_F.
	\end{equation}
	Thus the curvature score is independent of local frame orientation.
\end{proof}

\subsection{Normal Curvature Calibration}

Raw curvature is not used directly, because normal surfaces may naturally have stable non-zero curvature in feature space. For example, a curved object or a textured material may produce regular local frame rotations. We therefore estimate normal curvature statistics from the normal training set.

For each loop location $c$, we compute
\begin{equation}
	\mu_c
	=
	\frac{1}{M}
	\sum_{m=1}^{M}\kappa_c(x_m),
\end{equation}
and
\begin{equation}
	\sigma_c
	=
	\sqrt{
		\frac{1}{M-1}
		\sum_{m=1}^{M}
		\left(\kappa_c(x_m)-\mu_c\right)^2
	}.
\end{equation}
For a test image $x$, the calibrated loop score is
\begin{equation}
	s_c(x)
	=
	\frac{\kappa_c(x)-\mu_c}{\sigma_c+\eps}.
	\label{eq:calibration}
\end{equation}
We use the positive part: $\hat{s}_c(x)=\max(0,s_c(x))$.

This calibration makes the method respond to abnormal deviations from normal transport behavior rather than to non-zero curvature itself. This corresponds to stage 5 in Fig.~\ref{fig:overview}.

\subsection{Dense Anomaly Map}

Each loop score is distributed to the four nodes of the loop. Let $\mathcal{C}(i)$ be the set of valid loops incident to node $i$. The node-level anomaly score is
\begin{equation}
	a_i(x)
	=
	\frac{1}{|\mathcal{C}(i)|}
	\sum_{c\in\mathcal{C}(i)}
	\hat{s}_c(x).
\end{equation}
The dense anomaly map is then obtained by bilinear upsampling:
\begin{equation}
	A(x)
	=
	\mathrm{Upsample}\left(\{a_i(x)\}_{i\in V}\right).
\end{equation}
For image-level anomaly detection, we use a top-$q$ average:
\begin{equation}
	S_{\mathrm{img}}(x)
	=
	\frac{1}{|\Omega_q|}
	\sum_{u\in\Omega_q} A_u(x),
\end{equation}
where $\Omega_q$ denotes the set of pixels with the top $q$ percent anomaly scores. This corresponds to stage 6 in Fig.~\ref{fig:overview}.

\subsection{Computational Complexity}

Let $N=HW$ be the number of lattice nodes, $m=k^2$ the neighborhood size, and $r$ the frame rank. Local frame construction requires truncated SVD on $d\times m$ matrices, edge transport requires SVD on $r\times r$ matrices, and loop curvature requires multiplying four $r\times r$ matrices. Since $m$ and $r$ are small, all computations are local and parallelizable. The loop-curvature computation has complexity $O(|\mathcal{C}|r^3)$, where $|\mathcal{C}|$ is the number of valid loops.

\begin{table}[t]
	\centering
	\caption{Main results on MVTec AD.}
	\label{tab:main_mvtec}
	\resizebox{0.85\linewidth}{!}{
		\begin{tabular}{lccccc}
			\toprule
			Method & I-AUROC $\uparrow$ & I-AP $\uparrow$ & P-AUROC $\uparrow$ & P-AP $\uparrow$ & AUPRO $\uparrow$ \\
			\midrule
			PatchCore~\cite{Roth2021TowardsTR} & 99.1 & 99.2 & 98.1 & 55.3 & 93.2 \\
			SimpleNet~\cite{Liu2023SimpleNetAS} & 99.4 & 99.3 & 98.0 & 58.2 & 89.9 \\
			EfficientAD~\cite{Batzner2023EfficientADAV} & 99.1 & 99.3 & 97.8 & 60.2 & 93.0 \\
			RealNet~\cite{Zhang2024RealNetAF} & 99.2 & 99.2 & 98.9 & 62.8 & 92.8 \\
			GLASS~\cite{Chen2024AUA} & 99.0 & 99.2 & 98.2 & 63.7 & 93.5 \\
			GeneralAD~\cite{Strater2024GeneralADAD} & 99.4 & 99.5 & 98.1 & 65.1 & 94.0 \\
			INP-Former~\cite{Luo2025ExploringIN} & 99.5 & 99.6 & 98.4 & 68.4 & 95.2 \\
			Dinomaly~\cite{Guo2024DinomalyTL} & 99.3 & 99.2 & 98.0 & 68.5 & 94.3 \\
			ResNet50-NN~\cite{He2015DeepRL} & 97.9 & 98.1 & 96.4 & 53.4 & 89.5 \\
			DINOv2-NN~\cite{Oquab2023DINOv2LR} & 99.4 & 99.5 & 98.2 & 67.2 & 94.5 \\
			\rowcolor{gray!15}
			\method & \textbf{99.7}$\pm$0.1 & 99.7$\pm$0.1 & \textbf{98.8}$\pm$0.1 & \textbf{76.4}$\pm$0.5 & \textbf{96.5}$\pm$0.3 \\
			\bottomrule
	\end{tabular}}
	\vspace{-3mm}
\end{table}

\section{Experiments}

\subsection{Experimental Setup}

We evaluate \method on three industrial anomaly localization benchmarks: MVTec AD~\cite{Bergmann2019MVTecA}, VisA~\cite{Zou2022SPottheDifferenceSP}, and Real-IAD~\cite{Wang2024RealIADAR}. MVTec AD is used as the main benchmark because it is the standard reference for industrial anomaly localization. VisA tests generalization to more object categories and defect types. Real-IAD evaluates behavior in a larger real-world multi-view setting.

\subsection{Feature Lattice Implementation}

The feature lattice is the starting point of \method, so we make its construction explicit. DINOv2~\cite{Oquab2023DINOv2LR} lattice: The input image is resized to a fixed resolution divisible by the ViT patch size. Patch tokens are reshaped into a spatial grid and used as lattice nodes. Each node corresponds to one image patch and carries the corresponding token embedding. ResNet50 lattice: We also evaluate a ResNet50 backbone~\cite{He2015DeepRL}. For ResNet50, nodes are taken from an intermediate convolutional feature map. Each node corresponds to one spatial feature cell. This setting is used to test whether the proposed curvature score depends on transformer tokens or can also be computed on conventional CNN features. Default setting: The default configuration uses dense node sampling on the feature lattice, four-neighbor spatial edges, $3\times3$ local neighborhoods, frame rank $r=8$, and elementary $2\times2$ node loops. Boundary nodes that cannot form complete loops are excluded from loop computation and filled through interpolation during upsampling. Implementation details: All experiments are conducted on a single NVIDIA A100 GPU with 80GB memory. Unless otherwise specified, all hyperparameters use the default configuration described above. The default GaugeDefect setting uses the DINOv2 patch-token lattice, the middle feature layer, dense node sampling, frame rank $r=8$, and the elementary $2\times2$ node loop.

\subsection{Evaluation Metrics and Compared Methods}

For image-level anomaly detection, we report image AUROC (I-AUROC) and image average precision (I-AP). For pixel-level anomaly localization, we report pixel AUROC (P-AUROC), pixel average precision (P-AP), and AUPRO. Pixel-level AP and AUPRO are particularly important for small or thin defects, because pixel-level class imbalance can make AUROC less sensitive to localization quality. We compare \method with representative industrial anomaly localization methods, including PatchCore~\cite{Roth2021TowardsTR}, SimpleNet~\cite{Liu2023SimpleNetAS}, EfficientAD~\cite{Batzner2023EfficientADAV}, RealNet~\cite{Zhang2024RealNetAF}, GLASS~\cite{Chen2024AUA}, GeneralAD~\cite{Strater2024GeneralADAD}, INP-Former~\cite{Luo2025ExploringIN}, and Dinomaly~\cite{Guo2024DinomalyTL}. These baselines cover memory-based feature matching, feature-space anomaly generation, efficient distillation, synthetic anomaly modeling, cross-domain anomaly detection, prototype-based detection, and recent multi-class unsupervised anomaly detection. To distinguish the contribution of feature-transport curvature from the strength of pretrained features, DINOv2-NN and ResNet50-NN in Table~\ref{tab:main_mvtec} refer to DINOv2- and ResNet50-based variants that replace loop-holonomy curvature with pointwise nearest-neighbor feature distance. These two baselines use the same feature-lattice construction as \method.

\subsection{Main Quantitative Results and Qualitative Comparison}
\begin{wraptable}{r}{0.6\columnwidth} 
	\centering
	\vspace{-6mm}
	\caption{Cross-dataset AUPRO summary.}
	\label{tab:cross_dataset}
	\begin{tabular}{lccc}
		\toprule
		Method & MVTec AD & VisA & Real-IAD \\
		\midrule
		PatchCore~\cite{Roth2021TowardsTR} & 93.2 & 91.2 & 91.0 \\
		EfficientAD~\cite{Batzner2023EfficientADAV} & 93.0 & 92.2 & 89.2 \\
		SimpleNet~\cite{Liu2023SimpleNetAS} & 89.9 & 90.1 & 84.1 \\
		RealNet~\cite{Zhang2024RealNetAF} & 92.8 & 92.6 & 90.8 \\
		Dinomaly~\cite{Guo2024DinomalyTL} & 94.3 & 93.7 & 93.3 \\
		\rowcolor{gray!15}
		\method & \textbf{96.5} & \textbf{95.1} & \textbf{94.4} \\
		\bottomrule
	\end{tabular}
	\vspace{-3mm}
\end{wraptable}
Table~\ref{tab:main_mvtec} reports the main results on MVTec AD. While image-level metrics are already close to saturation on MVTec AD, \method shows clearer gains on localization-sensitive metrics, especially P-AP and AUPRO. Compared with DINOv2-NN, the improvement indicates that the gain is not merely due to stronger pretrained features, but comes from the proposed feature-transport curvature. Table~\ref{tab:cross_dataset} summarizes the cross-dataset AUPRO results. The consistent gains on VisA and Real-IAD suggest that the proposed curvature signal is not restricted to the standard MVTec AD setting, and remains useful under more diverse categories and larger real-world inspection scenarios. Fig.~\ref{fig:compare} compares PatchCore, EfficientAD, SimpleNet, Dinomaly, and \method on representative examples. The visual comparison focuses on thin scratches, small dents, and disrupted local patterns, where the local appearance can be ambiguous but the neighborhood relation is disturbed. To highlight discriminative regions and reduce background distraction, normal regions are masked in the visualization, and low-confidence responses with normalized scores below $0.4$ are omitted.

\begin{figure}[t]
	\centering
	\includegraphics[width=0.85\linewidth]{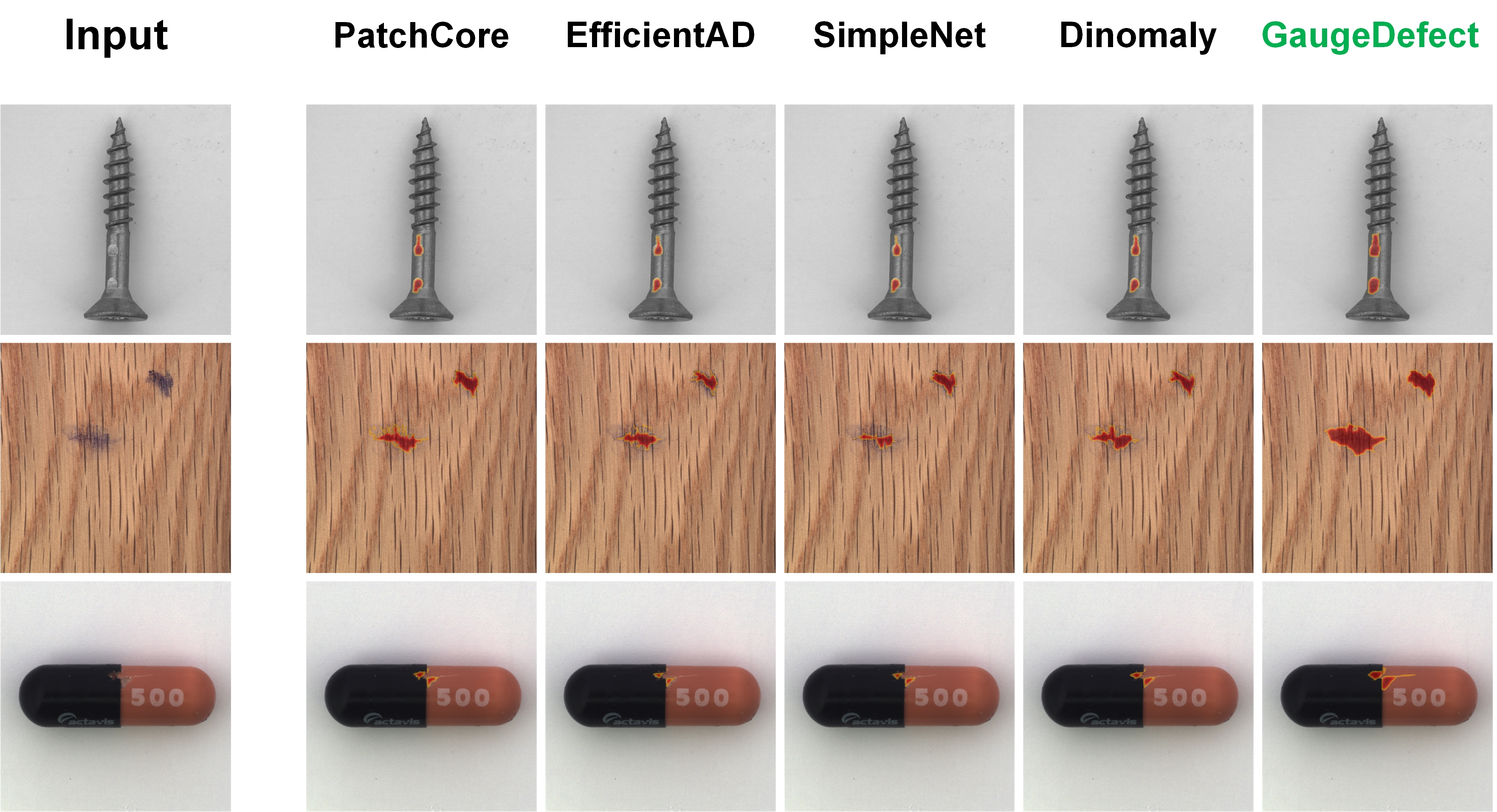}
	\caption{Qualitative comparison of anomaly localization results. \method tends to provide more precise responses on subtle surface disruptions such as thin scratches, small dents, and broken local patterns.}
	\label{fig:compare}
	\vspace{-3mm}
\end{figure}

\begin{table}[t]
	\centering
	\caption{Ablation on feature-lattice node construction. Except for the ResNet50 row, all variants are based on the default DINOv2 configuration}
	\label{tab:node_ablation}
	\begin{tabular}{lccc}
		\toprule
		Node construction & P-AUROC $\uparrow$ & P-AP $\uparrow$ & AUPRO $\uparrow$ \\
		\midrule
		DINOv2 patch-token lattice & 98.8 & 76.4 & 96.5 \\
		ResNet50 convolutional lattice & 97.2 & 59.6 & 91.8 \\
		Early feature layer & 97.9 & 64.8 & 93.4 \\
		Middle feature layer & 98.8 & 76.4 & 96.5 \\
		Late feature layer & 98.3 & 70.6 & 94.7 \\
		Dense nodes $(\rho=1)$ & 98.8 & 76.4 & 96.5 \\
		Subsampled nodes $(\rho=2)$ & 98.1 & 69.7 & 94.2 \\
		Subsampled nodes $(\rho=4)$ & 97.3 & 61.5 & 91.6 \\
		\bottomrule
	\end{tabular}
	\vspace{-3mm}
\end{table}

\subsection{Ablation on Node Construction}

Node construction is a core part of \method, since all subsequent frames, transports, and loops are defined on the feature lattice. Table~\ref{tab:node_ablation} evaluates several node-construction choices. The default GaugeDefect configuration is based on the DINOv2 patch-token lattice with the middle feature layer and dense node sampling. Backbone type: The DINOv2 patch-token lattice clearly outperforms the ResNet50 convolutional lattice, indicating that stable self-supervised patch features are more suitable for estimating local feature frames and transport curvature. This also supports our assumption that robust pretrained patch tokens provide a stronger basis for local frame construction. Feature layer: For DINOv2, we compare different transformer layers. For ResNet50, we compare intermediate convolutional stages. Early layers may preserve local texture but lack semantic stability, while later layers may be more robust but spatially coarser. Lattice resolution: We evaluate different feature resolutions by resizing the feature map or selecting different backbone feature resolutions. A finer lattice can capture thin defects, while a coarser lattice can be more stable but may blur small anomalies. Node density: Subsampling nodes reduces localization accuracy, with a larger drop at $\rho=4$. This confirms that dense lattice coverage is important for thin and small defects.

\begin{figure}[t]
	\centering
	\includegraphics[width=\linewidth]{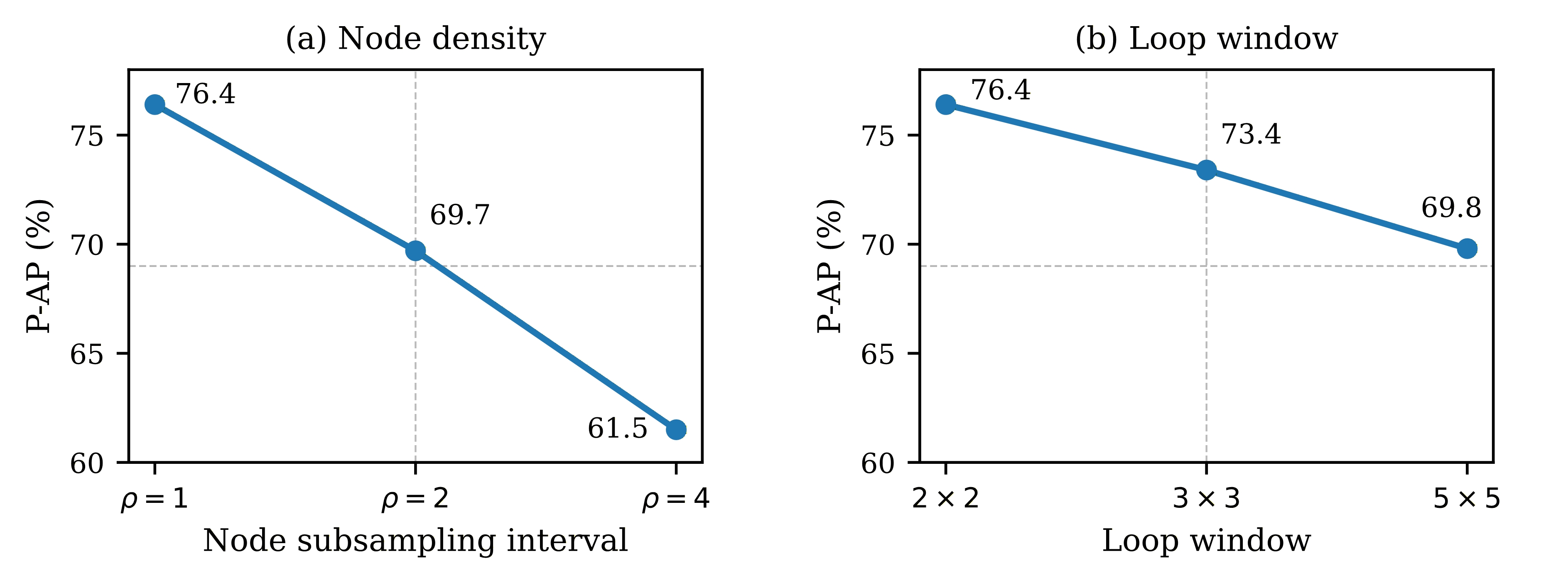}
	\caption{Sensitivity analysis on spatial support.}
	\label{fig:sensitivity}
	\vspace{-3mm}
\end{figure}

\subsection{Ablation on Curvature Components}

We further ablate the geometric components of \method under the default DINOv2 configuration. The results in Table~\ref{tab:curvature_ablation} show that the full model consistently performs best, especially on P-AP and AUPRO. Curvature vs. pointwise scores: Replacing feature-transport curvature with pointwise nearest-neighbor feature distance leads to a clear drop in P-AP. This indicates that the gain of \method does not only come from the pretrained feature lattice, but also from the closed-loop transport consistency. Loop holonomy: Using only neighboring frame discrepancy is weaker than the full holonomy score. This suggests that closed-loop composition captures a stronger relational signal than pairwise frame differences. Calibration and transport: Removing normal curvature calibration reduces performance, showing that normal surfaces can have stable non-zero transport curvature. The fixed-transport variant $R_{ij}=I_r$ drops more clearly, confirming that explicit orthogonal frame alignment is necessary. Frame rank and loop window: With the default $3\times3$ local neighborhood, we evaluate $r\in\{2,4,8\}$, where $r=8$ gives the best result. We also vary the loop window from $2\times2$ to $3\times3$ and $5\times5$. The default $2\times2$ loop is the elementary node loop and performs best; larger loop windows capture broader context but dilute thin scratches and small local disruptions. Fig.~\ref{fig:sensitivity} shows that dense node coverage and small loop windows are important for capturing local transport inconsistency in subtle defects. Dense nodes and the default $2\times2$ loop window give the best P-AP.

\begin{table}[t]
	\centering
	\caption{Ablation on feature-transport curvature. All variants are evaluated under the default DINOv2 configuration.}
	\label{tab:curvature_ablation}
	\begin{tabular}{lccc}
		\toprule
		Variant & P-AUROC $\uparrow$ & P-AP $\uparrow$ & AUPRO $\uparrow$ \\
		\midrule
		\rowcolor{gray!15}
		Full \method & 98.8 & 76.4 & 96.5 \\
		Feature distance only & 98.2 & 67.2 & 94.6 \\
		Frame discrepancy only & 98.0 & 65.8 & 94.0 \\
		Raw curvature without calibration & 98.1 & 68.9 & 94.3 \\
		Fixed transport $R_{ij}=I_r$ & 97.6 & 62.7 & 92.8 \\
		Rank $r=2$ & 98.1 & 68.4 & 94.3 \\
		Rank $r=4$ & 98.5 & 72.8 & 95.6 \\
		Rank $r=8$ & 98.8 & 76.4 & 96.5 \\
		Loop window $2\times2$ (default) & 98.8 & 76.4 & 96.5 \\
		Loop window $3\times3$ & 98.5 & 73.4 & 95.4 \\
		Loop window $5\times5$ & 98.1 & 69.8 & 94.2 \\
		\bottomrule
	\end{tabular}
	\vspace{-3mm}
\end{table}

\subsection{Additional Protocol and Efficiency}

\begin{wrapfigure}{r}{0.5\textwidth}
	\centering
	\vspace{-6mm}
	\includegraphics[width=\linewidth]{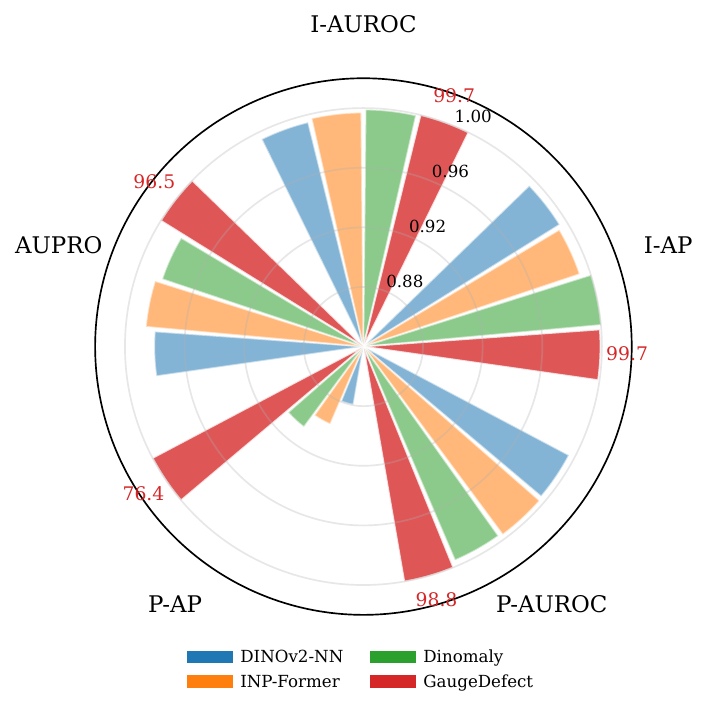}
	\caption{Metric profile on MVTec AD with per-axis best-score normalization.}
	\label{fig:petal_metrics}
	\vspace{-5mm}
\end{wrapfigure}

\method follows the standard single-class, per-category protocol. All datasets and categories use the same dense lattice, $3{\times}3$ neighborhood, rank $r{=}8$, and $2{\times}2$ loop; only the normal statistics $(\mu_c,\sigma_c)$ are category-specific. To separate the contribution of transport curvature from the pretrained backbone, we compare \method with the exact-lattice baseline DINOv2-NN, which uses the same DINOv2-B/14 lattice but replaces curvature with nearest-neighbor feature distance. Fig.~\ref{fig:petal_metrics} summarizes the metric profile on MVTec AD using per-axis best-score normalization. The gain of \method is concentrated on localization quality, especially P-AP and AUPRO. Table~\ref{tab:additional_eval} reports the added image-level results and end-to-end runtime. Runtime is measured on MVTec AD with a single A100 GPU, $512{\times}512$ inputs, batch size 1, and excluding disk I/O. Although local SVD and transport increase latency, \method still runs at 15.4 FPS while providing the strongest localization accuracy.

\begin{table}[t]
	\centering
	\caption{Image-level results and runtime.}
	\label{tab:additional_eval}
	\scriptsize
	\setlength{\tabcolsep}{2.5pt}
	\renewcommand{\arraystretch}{0.90}
	\resizebox{0.95\linewidth}{!}{
		\begin{tabular}{lccccc}
			\toprule
			Entry & I-AUROC $\uparrow$ & I-AP $\uparrow$
			& ms $\downarrow$ & FPS $\uparrow$ & GPU/CPU GB $\downarrow$ \\
			\midrule
			\multicolumn{6}{l}{\textit{Image-level detection}} \\
			VisA & 98.8 & 99.1 & -- & -- & -- \\
			Real-IAD & 97.6 & 98.2 & -- & -- & -- \\
			\midrule
			\multicolumn{6}{l}{\textit{MVTec AD runtime}} \\
			Dinomaly (DINOv2-B/14) & -- & -- & 36.8 & 27.2 & 3.6/4.0 \\
			INP-Former (DINOv2-B/14) & -- & -- & 52.4 & 19.1 & 5.0/4.8 \\
			D2Rec (ViT-B) & -- & -- & 45.7 & 21.9 & 4.4/4.6 \\
			\method (DINOv2-B/14) & -- & -- & 64.9 & 15.4 & 5.6/4.5 \\
			\bottomrule
	\end{tabular}}
	\vspace{-3mm}
\end{table}

\section{Conclusion}

\method supports more reliable and efficient industrial surface inspection. The main idea is to measure whether local feature frames remain consistent under transport around small closed loops. This leads to a gauge-invariant curvature score that captures local inconsistency in the feature field. The method complements existing detectors with a geometric signal for defects whose local appearance remains plausible. In practice, this often leads to more precise localization and clearer responses near the support of subtle defects.

\subsubsection*{Competing Interests}
The authors declare no competing interests relevant to this work.

%
%
%
\bibliographystyle{splncs04}
\bibliography{bibliography}

\end{document}